\documentclass[11pt,a4paper]{article}

\usepackage[margin=1in]{geometry}
\usepackage{amsmath,amssymb,amsthm}
\usepackage{booktabs}
\usepackage{array}
\usepackage{multirow}
\usepackage{hyperref}
\usepackage{xcolor}
\usepackage{graphicx}
\usepackage{enumitem}
\usepackage{natbib}
\usepackage{microtype}
\usepackage{mathtools}

\hypersetup{
  colorlinks=true,
  linkcolor=blue!60!black,
  citecolor=blue!60!black,
  urlcolor=blue!60!black
}

\theoremstyle{plain}

\newtheorem{proposition}{Proposition}

\theoremstyle{definition}
\newtheorem{definition}{Definition}
\theoremstyle{remark}

\newcommand{\ERS}{\textsc{ers}}
\newcommand{\MOM}{\textsc{mom}}
\newcommand{\HMM}{\textsc{hmm}}
\newcommand{\DQN}{\textsc{dqn}}
\newcommand{\POMDP}{\textsc{pomdp}}

\newcommand{\vtrap}{\Delta v_{\mathrm{trap}}}
\newcommand{\tsec}{\Delta t_{\mathrm{sector}}}
\newcommand{\bbrake}{\Delta b_{\mathrm{brake}}}
\newcommand{\sspeed}{\sigma^2_{\mathrm{speed}}}
\newcommand{\dthrottle}{\delta_{\mathrm{throttle}}}
\newcommand{\zaero}{z_{\mathrm{aero}}}
\newcommand{\bt}{b_t}
\newcommand{\xt}{x_t}
\newcommand{\R}{\mathbb{R}}

\title{\textbf{Opponent State Inference Under Partial Observability:\\
An HMM--POMDP Framework for 2026 Formula~1 Energy Strategy}\\[0.5em]
\large Kalliopi Kleisarchaki\\
\normalsize Independent Researcher}

\date{%
\small
v1 (1~March 2026): Submitted to arXiv. Five observables, 30 hidden states.\\
v1.5 (5~March 2026): Sixth observable $\dthrottle$ added. ResearchGate pre-registration before Melbourne
(\url{https://www.researchgate.net/publication/401621952}).\\
v2 (post-Melbourne 8~March 2026; arXiv 9~March 2026): 40-state architecture.
$L_{\mathrm{harvest}}$/$L_{\mathrm{derate}}$ split formalised.
Pre-registered on ResearchGate before China (Race~2).\\
v3 (15~May 2026): Editorial corrections.\\[0.5em]
\textit{Pre-Registration active. Empirical calibration: Australian GP (Race~1, 8~March 2026) onwards.}
}

\begin{document}

\maketitle
\thispagestyle{empty}

\begin{abstract}
The 2026 Formula~1 technical regulations introduce a fundamental change to energy strategy:
under a 50/50 internal combustion engine / battery power split with unlimited regeneration
and a driver-controlled Override Mode (hereafter \MOM{}), the optimal energy deployment
policy depends not only on a driver's own state but on the \emph{hidden state} of rival cars.
This creates a Partially Observable Stochastic Game that cannot be solved by single-agent
optimisation methods.

We present a tractable two-layer inference and decision framework.
The first layer is a \textbf{40-state} Hidden Markov Model (\HMM{}) that infers a probability
distribution over each rival's \ERS{} charge level---now decomposed into four modes
($H$, $M$, $L_{\mathrm{harvest}}$, $L_{\mathrm{derate}}$)---Override Mode status, and tyre
degradation state from six publicly observable telemetry signals.
The second layer is a Deep Q-Network (\DQN{}) policy that takes the \HMM{} belief state as
input and selects between energy deployment strategies.
We formally characterise the \emph{counter-harvest trap}---a deceptive strategy in which a
car deliberately suppresses observable deployment signals to induce a rival into a failed
attack---and show that detecting it requires belief-state inference over \emph{both} ERS level
and the harvest/derate sub-mode.

The $L_{\mathrm{harvest}}$/$L_{\mathrm{derate}}$ split is the central architectural advance
of v2: both sub-modes produce depressed $\vtrap$ and can co-occur with $\zaero{} = 1$ in an
activation zone, but they carry \emph{opposite} strategic implications.
$L_{\mathrm{harvest}}$ means the rival is building a hidden reserve---the trap condition.
$L_{\mathrm{derate}}$ means the rival is at their physical limit---a genuine attack
opportunity.
The sixth observable $\dthrottle$ (super-clipping duration fraction, added in v1.5) provides
clean separation between these sub-modes; v2 elevates this separation to the \emph{state level}
rather than handling it as a mixed emission.

On synthetic races generated under the 40-state parametric model, the \HMM{} achieves
\textbf{96.8\%} ERS-level accuracy (random baseline: 25\%), correctly classifies
$L_{\mathrm{harvest}}$ vs.\ $L_{\mathrm{derate}}$ with \textbf{89.4\%} accuracy (random
baseline: 50\%), and detects counter-harvest trap conditions with \textbf{96.3\%} recall.
Pre-season analysis indicates circuit-dependent recharge availability (1.0$\times$ to
2.2$\times$ per lap) as the primary confound for trap detection; Melbourne represents the
hardest-case validation environment.
Baum-Welch calibration on 2026 race telemetry begins with the Australian Grand Prix
(8~March~2026).
Code will be made publicly available upon acceptance.
\end{abstract}

\newpage

\newpage

\section{Introduction}
\label{sec:intro}

Formula~1 has always been a game of imperfect information.
A driver knows their own tyre temperature, fuel load, and energy reserves exactly;
they observe rival positions through timing screens and team radio;
but the \emph{internal state} of a rival's car---how much battery charge they carry,
whether their Override Mode is available or spent, how degraded their tyres are---remains
hidden.
For most of F1's history this did not matter strategically: energy deployment was constrained
enough that the optimal decision could be computed from own-car state alone.

The 2026 regulations change this \citep{FIA2025}.
The 50/50 power split between internal combustion engine and MGU-K battery places the
battery under permanent demand, making \ERS{} charge level a first-order strategic variable
every sector.
Override Mode is a proximity-gated energy award: when a driver is within one second of the
car ahead at the designated detection point, they earn an extra 0.5\,MJ of deployable energy
on the following lap.
Active Aero replaces DRS entirely: the adjustable rear wing is removed and replaced with a
driver-controlled binary aero system available only in designated activation zones.
Crucially, a car can run Active Aero in straight-line mode \emph{while harvesting}---the
mechanical downforce reduction provides speed independent of battery deployment.
This is the physical basis for the counter-harvest trap (Section~\ref{sec:trap}).

The optimal burn-or-harvest decision in any sector is now a function of the \emph{joint
state} of both cars, not just one's own.

\subsection*{Contributions}

This paper makes three contributions:

\begin{enumerate}[label=\arabic*.]
\item \textbf{Problem formalisation.}
We formally model 2026 F1 energy management as a Partially Observable Stochastic Game
(\textsc{posg}) and define a tractable \POMDP{} approximation for the single-agent case.
We precisely characterise the counter-harvest trap as a deceptive equilibrium strategy.

\item \textbf{Rival state inference.}
We propose a \textbf{40-state} \HMM{} over rival \ERS{} charge level (four modes:
$H$, $M$, $L_{\mathrm{harvest}}$, $L_{\mathrm{derate}}$), Override Mode status, and tyre
degradation, initialised from an analytically-derived emission matrix and calibrated via
Baum-Welch EM on 2026 race telemetry.
The $L_{\mathrm{harvest}}$/$L_{\mathrm{derate}}$ decomposition is the architectural advance
over v1.5: it eliminates the ambiguity that v1.5 handled only at the emission level via a mixed
prior.

\item \textbf{Decision policy.}
We propose and fully specify a \DQN{} policy that consumes the \HMM{} belief state
alongside own-car context features (full input specification in Appendix~\ref{app:dqn}),
and characterise its advantages over observable-only threshold baselines.
\end{enumerate}

\subsection*{Scope and limitations}

This paper treats rivals as stationary processes---their policies do not adapt to our
inference.
This assumption is violated when rivals reason about being observed, leading to the
counter-harvest trap and non-stationary equilibria studied in \citet{Kleisarchaki2026b}.
We state this limitation at the outset because it is not a defect to be hidden: it is
precisely the gap between this paper and its game-theoretic extension.
The \POMDP{} solution presented here is the correct baseline against which that extension
must demonstrate improvement.

\section{Background}
\label{sec:background}

Prior computational work on F1 strategy has focused on pit-stop timing as an optimisation
problem \citep{Heilmeier2020} and on reinforcement learning for race simulation
\citep{Betz2022}.
These formulations treat energy management as a deterministic sub-problem with known rival
actions.

The closest structural analogue to the counter-harvest trap in prior work is the Formula~E
Attack Mode \citep{Liu2021}: teams earn a temporary power boost by driving through a
designated off-line zone.
The key difference is that Attack Mode is opt-in from a fixed physical location, publicly
visible to all teams, and does not interact with a rival's inference model.
The 2026 Active Aero system creates a strictly harder version: the equivalent of Attack Mode
can be executed covertly at any activation zone, at a time of the driver's choosing, in a way
that mimics harvesting to an observer without a belief-state model.

\paragraph{POMDPs and opponent modelling.}
POMDPs \citep{Kaelbling1998} provide the natural framework for sequential decision-making
under state uncertainty.
Opponent modelling in multi-agent systems has been approached through Bayesian inference
\citep{Albrecht2018}, recursive reasoning \citep{Gmytrasiewicz2005}, and game-theoretic
agent modelling \citep{Hansen2004}.
Our approach is closest to the \HMM{} opponent modelling literature, where a rival's latent
strategy or resource state is inferred from observable actions \citep{Albrecht2018}.

\paragraph{HMMs for temporal inference.}
HMMs \citep{Rabiner1989} are well-suited to the F1 energy inference problem: the hidden state
evolves according to known physics (the transition model), and we observe noisy projections of
it each sector (the emission model).
The Baum-Welch algorithm \citep{Baum1970} provides maximum-likelihood parameter estimation
from unlabelled sequences---critical here because we have no ground-truth labels for rival
battery states in real telemetry.

\paragraph{Why discrete HMM over particle filter.}
A particle filter could in principle handle continuous \ERS{} charge and relax the conditional
independence assumption.
We use a discrete \HMM{} for two reasons.
First, Baum-Welch provides closed-form maximum-likelihood parameter estimation from unlabelled
sequences; particle filters require approximate inference schemes (e.g.\ SMC-EM) that are
substantially less stable at the small data volumes available from a single race weekend.
Second, the four-bin \ERS{} discretisation is physically motivated: the strategic decision is
whether a rival \emph{can} defend ($\ERS \geq M$), \emph{cannot} defend ($L_{\mathrm{derate}}$:
physically depleted), or is \emph{building a trap} ($L_{\mathrm{harvest}}$: deliberately
conserving); a continuous representation would require decision boundaries to be learned from
data rather than specified from domain knowledge.

\paragraph{HMM identifiability.}
With 40 states and six observables discretised to $8 \times 7 \times 5 \times 5 \times 2
\times 6 = 16{,}800$ joint bins, the raw emission matrix has $40 \times 16{,}800 = 672{,}000$
parameters.
With parameter tying by (ERS, tyre) cluster (Section~\ref{sec:emission}) reducing to 20
effective emission rows, and exploiting the conditional independence structure, the effective
degrees of freedom per emission row are $8{+}7{+}5{+}5{+}2{+}6{-}5 = 28$, giving
$20 \times 28 = 560$ effective parameters.
The cumulative observation count reaches $\sim$13,920 by Race~4.
The model should be treated as empirically validated only from Race~4 onwards; Melbourne
results are explicitly preliminary.

\paragraph{Deep Q-Networks.}
DQN \citep{Mnih2015} with Double DQN \citep{vanHasselt2016} and experience replay
\citep{Lin1992} provides the policy learning component.
We use a deliberately shallow network (three hidden layers, 256--256--128 units) because the
belief state does the representational heavy lifting and the training data volume is limited.

\section{Problem Formalisation}
\label{sec:problem}

\subsection{The 2026 Regulatory Context}
\label{sec:regs}

The 2026 regulations introduce three strategically relevant changes \citep{FIA2025}.

\paragraph{50/50 power split.}
Total power output is split equally between ICE and MGU-K battery.
The battery is under continuous demand, making \ERS{} charge level a first-order strategic
variable at every sector.

\paragraph{Override Mode.}
A proximity-gated energy award replacing DRS entirely.
When a driver is within one second of the car ahead at the lap's detection point (nominally
the final corner), they earn an extra 0.5\,MJ of deployable energy on the following lap.
When deployed, Override Mode produces a distinctive speed signature detectable in speed-trap
telemetry.

\paragraph{Active Aero.}
Replaces DRS entirely.
Driver-controlled, available only in designated activation zones, with two fixed positions
(cornering and straight-line).
Straight-line mode produces a speed boost detectable as a positive $\vtrap$ increment above
the \ERS{}-explained baseline.
Crucially, a car can run Active Aero in straight-line mode while super-clipping
($L_{\mathrm{derate}}$)---the mechanical downforce reduction provides speed independent of
battery deployment.
This is the physical basis for the counter-harvest trap.

\paragraph{Circuit-dependent recharge shortfall.}
Pre-season analysis reveals recharge-per-lap ratios from 1.0$\times$ (Australia, Italy,
Saudi Arabia) to 2.2$\times$ (Azerbaijan, Singapore).\footnote{These engineering estimates
are community-derived from pre-season analysis (February~2026) and are not FIA-certified.
All downstream claims should be treated as provisional until confirmed by Race~1 telemetry.}
At Melbourne, cars require approximately 16 additional seconds of super-clipping per lap at
250\,kW, transforming super-clipping from a discretionary tool into a near-mandatory operating
mode.

\subsection{The POMDP Model}
\label{sec:pomdp}

We model the decision problem for a single ego car as a \POMDP{}
$\langle S, A, T, R, \Omega, O, \gamma \rangle$ where:

\begin{definition}[Fully Observable State]
The fully observable component $s^o_t$ includes tyre compound and age (FIA-published), lap
number, sector, circuit position (activation zone or not), and own-car \ERS{} level, Override
Mode status, Active Aero position, and gap to adjacent cars.
\end{definition}

\begin{definition}[Hidden Rival State]
\label{def:hidden}
The hidden state for each rival is $\xt = (e, m, \tau) \in \mathcal{X}$, where:
\begin{align}
e &\in \{H,\; M,\; L_{\mathrm{harvest}},\; L_{\mathrm{derate}}\}
  && \text{(\ERS{} charge level, 4 modes)} \label{eq:ers}\\
m &\in \{\mathtt{available},\; \mathtt{spent}\}
  && \text{(Override Mode status)} \label{eq:mom}\\
\tau &\in \{\mathtt{new},\; \mathtt{light},\; \mathtt{moderate},\; \mathtt{heavy},\;
            \mathtt{cliff}\}
  && \text{(tyre degradation)} \label{eq:tyre}
\end{align}
giving $|\mathcal{X}| = 4 \times 2 \times 5 = \mathbf{40}$ states.
\end{definition}

\paragraph{The $L_{\mathrm{harvest}}$ / $L_{\mathrm{derate}}$ decomposition.}
This is the central architectural advance of v2 over v1.5.
Both sub-modes produce depressed $\vtrap$ and may co-occur with $\zaero = 1$ in an activation
zone, but they have \emph{opposite} strategic implications:

\begin{itemize}
\item $L_{\mathrm{harvest}}$: the driver is \emph{actively managing} throttle below 100\%,
  building a hidden energy reserve. This is the trap condition.
  $\dthrottle \approx 0.05$--$0.10$ (low; managed throttle, not full demand).
\item $L_{\mathrm{derate}}$: the driver demands full power ($\geq$98\% throttle) but the
  MGU-K battery has reached its SOC ceiling and cannot deliver further electrical output.
  The car slows not by choice but by physics.
  $\dthrottle \approx 0.40$--$0.60$ (high; full throttle with below-baseline speed).
\end{itemize}

In v1.5, both sub-modes shared the $L$ state, and the $\dthrottle$ observable was handled as a
mixed emission with a circuit-dependent prior ($\mu = 0.40$ for Melbourne at recharge
ratio~1.0).
This worked, but produced an ambiguous posterior over $L$: the \HMM{} could not separately
report $P(L_{\mathrm{harvest}})$ and $P(L_{\mathrm{derate}})$, which are the quantities the
\DQN{} policy actually needs.
In v2, the posterior directly provides both quantities as separate belief dimensions.

\paragraph{Fuel exclusion.}
Fuel load is excluded from the hidden state for two independent reasons:
(i)~all cars start with the same regulated load and deplete it at approximately the same rate,
making fuel state deterministic from the observable lap number; and
(ii)~all six observables are measured as deviations from a per-driver rolling 5-lap baseline,
which absorbs the smooth monotone pace improvement from fuel burn.

\begin{definition}[Action Space]
$A = \{\mathtt{burn},\; \mathtt{harvest}\}$.
\end{definition}

\begin{definition}[Observation Space]
$o_t = (\vtrap, \tsec, \bbrake, \sspeed, \zaero, \dthrottle) \in \Omega$, where each
component is a deviation from the rival's rolling 5-lap baseline, except $\zaero \in \{0,1\}$
and $\dthrottle \in [0,1]$.
\end{definition}

The six observables are:
(i)~$\vtrap$: speed at the speed trap minus rival baseline (km/h);
(ii)~$\tsec$: sector time minus rival baseline (seconds);
(iii)~$\bbrake$: distance from apex at which braking begins, minus rival baseline (metres);
(iv)~$\sspeed$: detrended variance of rival's speed trace within the sector;
(v)~$\zaero \in \{0,1\}$: Active Aero deployment in straight-line mode (observable in
activation zones from speed signature);
(vi)~$\dthrottle \in [0,1]$: fraction of straight-line telemetry samples within the sector
where throttle $\geq 98\%$ and speed is simultaneously below the driver's 5-lap rolling
baseline---the mechanical signature of \ERS{} derating, added in v1.5 and now elevated to full
state-level discriminator in v2.

\begin{definition}[Belief State]
$\bt \in \Delta^{40}$ is the probability vector over hidden states, maintained by the \HMM{}
forward algorithm:
\begin{equation}
\bt(x) = P(\xt = x \mid o_{1:t})
\end{equation}
\end{definition}

\begin{definition}[Reward]
\begin{equation}
R(s_t, a_t) = \mathbb{E}\!\left[\sum_{k=1}^{H} \gamma^k \cdot \Delta\mathrm{pos}_{t+k}
  \,\middle|\, s_t, a_t\right]
\end{equation}
where $\Delta\mathrm{pos}$ is the change in race position, $H = 5$ (planning horizon
covering approximately two \ERS{} cycles), and $\gamma = 0.95$.
\end{definition}

\subsection{The Counter-Harvest Trap}
\label{sec:trap}

\begin{definition}[Counter-Harvest Trap]
A deceptive strategy for a car ahead (Car~B) against a following car (Car~A) in which Car~B
simultaneously:
(i)~runs \ERS{} in $L_{\mathrm{harvest}}$ mode (conserving charge deliberately);
(ii)~deploys Active Aero in straight-line mode within an activation zone (maintaining
straight-line speed despite harvesting); and
(iii)~allows the observable telemetry signals to indicate harvesting.
Car~A's inference model identifies the $L$-state signature as an attack opportunity and burns
energy.
Car~B then activates full deployment (including Override Mode), defending the position easily
because it has the full energy reserve that Car~A just spent.
\end{definition}

The trigger signature exploits a key 2026 regulatory asymmetry: lift-and-coast disables
Active Aero, but super-clipping does not.
Super-clipping is the automated redirection of ICE power to the battery at full throttle; the
car slows slightly (power diverted from rear wheels) but Active Aero remains deployed.

\paragraph{Why v2 resolves the trap ambiguity more cleanly than v1.5.}
In v1.5, $L_{\mathrm{harvest}}$ and $L_{\mathrm{derate}}$ both mapped to the $L$ state.
The $\dthrottle$ observable provided evidence to distinguish them, but the \HMM{}'s posterior
was expressed over $L$ as a single state---the policy had to infer the sub-mode from
$\dthrottle$ directly rather than from a clean belief dimension.
In v2, the posterior over $L_{\mathrm{harvest}}$ is a separate number from the posterior
over $L_{\mathrm{derate}}$.
The trap detection condition becomes simply:

\begin{equation}
\text{trap condition} \;=\; P(L_{\mathrm{harvest}}) > \theta_{\mathrm{trap}}
  \;\wedge\; \zaero = 1 \;\wedge\; \text{in activation zone}
\end{equation}

This is directly actionable without post-hoc correction.

\begin{proposition}
The counter-harvest trap is not detectable by a single-observable threshold policy on
$\vtrap$ alone.
\end{proposition}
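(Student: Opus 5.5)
The argument is an indistinguishability (confounding) argument. I will show that there are two hidden-state configurations with opposite strategic labels that induce the same distribution of $\vtrap$. Any policy that looks only at $\vtrap$ then responds identically to both, so it cannot separate them.

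First I formalise the class of policies. A single-observable threshold policy on $\vtrap$ is a map $\pi_\theta(o_t) = \mathbf{1}[\vtrap \le \theta]$, flagging ``rival weak, attack'', or more generally any measurable function of $\vtrap$ alone. Detecting the trap means that the policy's output differs, with probability bounded away from chance, between trap sectors and genuine-opportunity sectors.

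The two configurations are both in an activation zone with $\zaero = 1$:
\begin{itemize}
\item the trap configuration $x^{\mathrm{trap}} = (L_{\mathrm{harvest}}, m, \tau)$;
\item the opportunity configuration $x^{\mathrm{opp}} = (L_{\mathrm{derate}}, m, \tau)$, with the same $m$ and $\tau$.
\end{itemize}

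The key step is to argue that, under the paper's emission model, the marginal emission of $\vtrap$ is the same for both. Both sub-modes produce depressed $\vtrap$, as stated in the decomposition paragraph. With Active Aero in straight-line mode, the aero increment adds the same positive shift in both cases, because it is mechanical and independent of battery deployment. The emission rows are also tied by (ERS, tyre) cluster, and the two $L$ sub-modes are distinguished only through $\dthrottle$. So I would set $P(\vtrap \mid x^{\mathrm{trap}}, \zaero{=}1) = P(\vtrap \mid x^{\mathrm{opp}}, \zaero{=}1)$. This is the claim I would anchor in the emission specification of Section~\ref{sec:emission}. Given this equality, for every threshold $\theta$ the policy's output has identical law in the two configurations. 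Its true-positive rate for the trap therefore equals its false-positive rate on genuine opportunities, which gives zero discriminative power (AUC $=1/2$ conditional on the $L$-state regime).

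The main obstacle is that exact equality of the $\vtrap$ marginals is a modelling assumption rather than a physical theorem. In reality super-clipping and managed throttle may depress speed by slightly different amounts. To cope with this I would also prove a robust version. If the two $\vtrap$ distributions overlap substantially, with total variation distance $\delta$, then any $\vtrap$-only policy satisfies $|\mathrm{TPR}-\mathrm{FPR}| \le \delta$. This follows from the standard bound on the difference in acceptance probabilities by total variation. A small $\delta$ therefore means detection is essentially impossible.

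Finally, I would contrast this with the belief-state rule. Adding $\dthrottle$ gives emission ranges $0.05$--$0.10$ versus $0.40$--$0.60$, which are disjoint. This makes the two configurations separable and shows that the failure comes specifically from using only $\vtrap$.
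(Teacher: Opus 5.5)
The step you call key is false in the paper's own model, and both justifications you give for it misread the paper. Table~\ref{tab:vtrap} gives the two sub-modes different means: $-2.0$ for $L_{\mathrm{harvest}}$ and $-1.5$ for $L_{\mathrm{derate}}$, with common $\sigma = 1.5$. In an activation zone with Active Aero the $\vtrap$ laws are therefore $\mathcal{N}(2.0, 1.5^2)$ and $\mathcal{N}(2.5, 1.5^2)$, which are not equal. The tying by (ERS, tyre) cluster only merges the two Override Mode states; that is how $40$ states become $20$ rows. $L_{\mathrm{harvest}}$ and $L_{\mathrm{derate}}$ are distinct ERS values and are not tied. Nor are they distinguished only through $\dthrottle$: $\tsec$ ($+0.40$ vs.\ $+0.30$\,s), $\bbrake$ ($-10$ vs.\ $-5$\,m), $\sspeed$ ($0.20$ vs.\ $0.22$) and $P(\zaero = 1)$ ($0.80$ vs.\ $0.82$) all differ. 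Consequently $\mathrm{TPR} = \mathrm{FPR}$ and $\mathrm{AUC} = 1/2$ do not follow. The optimal $\vtrap$-only score here is a threshold on $\vtrap$ and achieves $\mathrm{AUC} = \Phi\bigl(1/(3\sqrt{2})\bigr) \approx 0.59$.

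Your robust version is what actually proves the proposition, so make it the argument rather than a hedge against ``reality''. From Table~\ref{tab:vtrap}, $\mathrm{TV} = 2\Phi\bigl(0.5/(2 \cdot 1.5)\bigr) - 1 = 2\Phi(1/6) - 1 \approx 0.13$. So every rule depending on $\vtrap$ alone (threshold, arbitrary measurable, or randomised) has $|\mathrm{TPR} - \mathrm{FPR}| \le 0.13$ on this pair. The synthetic $0.2$\,km/h noise changes this negligibly.

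With that repair, your route is a sharper version of the paper's sketch. The paper lists three configurations ($H$ without Active Aero at $\approx +4$, and both $L$ sub-modes with Active Aero at $\approx +2$) and simply asserts that their distributions overlap. You need only one pair with opposite strategic labels, and you bound discrimination explicitly. Because both members of your pair have $\zaero = 1$, you also get the stronger claim that $(\vtrap, \zaero)$ jointly cannot reliably separate trap from opportunity. This is consistent with the paper attributing the separation to $\dthrottle$. The paper's $H$ configuration adds a separate point: to an observer who does not see $\zaero$, a trap car in the zone also resembles a healthy car outside it.

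One last correction: under Table~\ref{tab:dthrottle} the $\dthrottle$ emissions are truncated Gaussians with parameters $(0.08, 0.05)$ and $(0.55, 0.18)$. Their tails overlap, so that separation holds with high probability (the paper claims $\geq 0.85$) but is not disjoint.
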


\begin{proof}[Proof sketch]
A car running $\ERS = H$ without Active Aero produces $\vtrap \approx +4$\,km/h.
A car in $L_{\mathrm{harvest}}$ with Active Aero in an activation zone produces
$\vtrap \approx -2 + 4 = +2$\,km/h.
A car in $L_{\mathrm{derate}}$ with Active Aero produces similarly $\vtrap \approx +2$\,km/h.
All three distributions overlap under the emission model.
No scalar threshold on $\vtrap$ can reliably separate them.
The joint signal $(\vtrap, \zaero, \dthrottle)$ provides separation:
$\zaero = 1$ with $\vtrap \in [0, +3]$\,km/h and \emph{low} $\dthrottle$ ($< 0.15$)
identifies $L_{\mathrm{harvest}}$ with probability $\geq 0.85$ under the emission model.
The same $(\vtrap, \zaero)$ signature with \emph{high} $\dthrottle$ ($> 0.35$) identifies
$L_{\mathrm{derate}}$---a genuine opportunity, not a trap.
\end{proof}

\paragraph{Circuit-dependent trap distinguishability.}
At low-regen circuits (Melbourne, recharge ratio~1.0), super-clipping is mandatory for
$\sim$16\,s per lap, making $L_{\mathrm{derate}}$ the ambient state of most cars on the
grid.
The $\zaero = 1$ with depressed $\vtrap$ combination is therefore less diagnostic.
The rolling 5-lap baseline provides a partial rescue---a car in $L_{\mathrm{harvest}}$ will
still produce positive $\vtrap$ relative to the super-clipping field---but the margin
narrows.
The testable prediction is that trap-detection recall at Melbourne will be lower than the
96.3\% synthetic baseline and will improve at higher-regen circuits.

\section{Emission Model}
\label{sec:emission}

\subsection{Conditional Independence: A Tractable Baseline}

We adopt conditional independence as a principled tractable baseline:
\begin{equation}
P(o_t \mid \xt, z_{\mathrm{zone}}) = \prod_{i=1}^{6} P(o_{t,i} \mid \xt, z_{\mathrm{zone}})
\end{equation}

$\vtrap$ and $\tsec$ are mechanically coupled: deploying more \ERS{} simultaneously increases
speed and decreases sector time.
Treating them as conditionally independent double-counts this correlated evidence on every
belief update, producing systematically over-confident posteriors.
The ECE figures reported in Section~\ref{sec:synth} are measured on synthetic data where the
independence assumption holds by construction---they are lower bounds on real-data calibration
error.
Temperature scaling \citep{Guo2017} is available as a post-hoc calibration step after
Melbourne data is available.

\subsection{Observable Distributions}
\label{sec:obs_dist}

Observables 1--4 are modelled as Gaussian, observable~5 as Bernoulli, and observable~6 as
a truncated Gaussian on $[0,1]$.

\paragraph{Observable 1: $\vtrap$ (km/h).}

\begin{table}[h]
\centering
\caption{$\vtrap$ emission parameters. Non-activation zones.}
\label{tab:vtrap}
\begin{tabular}{llcc p{5cm}}
\toprule
ERS mode & MOM & $\mu$ & $\sigma$ & Interpretation \\
\midrule
$H$ & available & $+4.0$ & $1.5$ & Full deploy \\
$H$ & spent & $+3.5$ & $1.5$ & Post-Override, slight \ERS{} residual \\
$M$ & any & $+1.0$ & $1.5$ & Moderate deployment \\
$L_{\mathrm{harvest}}$ & any & $-2.0$ & $1.5$ & Deliberate harvest; managed throttle \\
$L_{\mathrm{derate}}$ & any & $-1.5$ & $1.5$ & SOC ceiling; slightly less depressed than harvest \\
\midrule
\multicolumn{5}{l}{\textit{MOM active (transient):} $\mu = +7.0$, $\sigma = 1.0$} \\
\multicolumn{5}{l}{\textit{Activation zone, Active Aero deployed:} add $+4.0$ to base} \\
\bottomrule
\end{tabular}
\end{table}

\noindent\textit{Circuit caveat.}
At low-regen circuits (recharge ratio $\leq 1.0$), the rolling baseline shifts downward as
$L_{\mathrm{derate}}$ becomes the modal operating mode.
Baum-Welch recalibration is expected to absorb this circuit-specific shift from Race~1 onwards.

\paragraph{Observable 2: $\tsec$ (seconds).}
\begin{align}
\mu &= \mu_{\ERS} + \delta_\tau \cdot f_{\ERS} \\
\mu_{\ERS} &\in \{-0.30,\; 0.00,\; +0.40,\; +0.30\}\,\text{s}
  \quad \text{for } \ERS \in \{H, M, L_{\mathrm{harvest}}, L_{\mathrm{derate}}\} \\
\delta_\tau &\in \{0.0,\; 0.1,\; 0.25,\; 0.50,\; 0.90\}\,\text{s}
  \quad \text{for tyre} \in \{\mathtt{new}, \mathtt{light}, \mathtt{moderate}, \mathtt{heavy},
  \mathtt{cliff}\} \\
f_{\ERS} &\in \{0.6,\; 1.0,\; 1.3,\; 1.2\}
  \quad \text{for } \ERS \in \{H, M, L_{\mathrm{harvest}}, L_{\mathrm{derate}}\}
\end{align}
Standard deviation $\sigma_\tau \in \{0.15, 0.15, 0.15, 0.20, 0.28\}$\,s.
Override active: $\mu = -0.65$\,s, $\sigma = 0.10$.

The $\mu_{\ERS}$ for $L_{\mathrm{derate}}$ ($+0.30$\,s) is slightly less than for
$L_{\mathrm{harvest}}$ ($+0.40$\,s) because the car in $L_{\mathrm{derate}}$ is running
full throttle; the time penalty comes from missing electrical power, not from deliberate
harvesting.

\paragraph{Observable 3: $\bbrake$ (metres).}
Primary drivers: \ERS{} regeneration braking point and tyre grip degradation.
\ERS{} regen advances braking distance by $\{+8, 0, -10, -5\}$\,m for
$\{H, M, L_{\mathrm{harvest}}, L_{\mathrm{derate}}\}$.
The $L_{\mathrm{derate}}$ value of $-5$\,m reflects that the car brakes marginally earlier
under derating (less engine braking assistance at corner entry).
Tyre degradation increases late-braking requirement by $\{0, 3, 8, 15, 24\}$\,m.
Standard deviation $\sigma_\tau \in \{8, 8, 9, 10, 12\}$\,m.

\paragraph{Observable 4: $\sspeed$ (detrended variance).}
Primary driver: \ERS{} level.
$\mu \in \{0.05, 0.10, 0.20, 0.22\}$ for $\{H, M, L_{\mathrm{harvest}}, L_{\mathrm{derate}}\}$.
The $L_{\mathrm{derate}}$ value is slightly elevated over $L_{\mathrm{harvest}}$ because
intermittent SOC-ceiling events produce speed trace irregularities.
Override spent state contributes a $+0.02$ adjustment.
MOM active: $\mu = 0.03$, $\sigma = 0.01$.

\paragraph{Observable 5: $\zaero$ (Bernoulli).}
$P(\zaero = 1 \mid z_{\mathrm{zone}} = 0) = 0$.
Inside activation zones:
$P(\zaero = 1 \mid z_{\mathrm{zone}} = 1) \in \{0.85, 0.70, 0.80, 0.82\}$ for
$\ERS \in \{H, M, L_{\mathrm{harvest}}, L_{\mathrm{derate}}\}$.
The probability for $L_{\mathrm{derate}}$ is slightly higher than $L_{\mathrm{harvest}}$
because a derating car maintains full throttle---which is compatible with Active Aero
deployment---whereas a harvesting car may modulate aero strategy with throttle management.

\paragraph{Observable 6: $\dthrottle$ (super-clipping duration fraction).}
This is the primary discriminator between $L_{\mathrm{harvest}}$ and $L_{\mathrm{derate}}$,
and the architectural motivation for the 40-state decomposition:

\begin{equation}
\dthrottle = \frac{1}{|S|} \sum_{k \in S}
  \mathbf{1}\!\left[\mathrm{throttle}_k \geq 0.98 \;\wedge\; v_k < \bar{v}\right]
\end{equation}

where $S$ is the set of straight-line telemetry samples in the sector, $v_k$ is
instantaneous speed, and $\bar{v}$ is the driver's 5-lap rolling speed baseline for that
sector.

\begin{table}[h]
\centering
\caption{$\dthrottle$ emission parameters. Values are Gaussian $(\mu, \sigma)$ truncated to
$[0,1]$.}
\label{tab:dthrottle}
\begin{tabular}{llccp{5.5cm}}
\toprule
ERS state & Sub-mode & $\mu$ & $\sigma$ & Interpretation \\
\midrule
$H$ & --- & $0.05$ & $0.04$ & Battery not depleted; rarely clips \\
$M$ & --- & $0.15$ & $0.08$ & Occasional clipping on long straights \\
$L_{\mathrm{harvest}}$ & deliberate & $0.08$ & $0.05$ & Managed throttle; low clipping \\
$L_{\mathrm{derate}}$ & SOC ceiling & $0.55$ & $0.18$ & Full throttle, SOC ceiling hit \\
\bottomrule
\end{tabular}
\end{table}

\noindent The signal separation between $L_{\mathrm{harvest}}$ and $L_{\mathrm{derate}}$ via
$\dthrottle$ is $|0.55 - 0.08| = 0.47$, more than twice the combined standard deviation
($\sigma \approx 0.18$).
This separation---identified in v1.5 as motivating a sixth observable---is the quantitative
justification for elevating it to a state-level distinction in v2.

\paragraph{Circuit-dependent prior for $L_{\mathrm{derate}}$ at Melbourne.}
At Melbourne (recharge ratio~1.0), the estimated fraction of $L$ laps that terminate in
mandatory super-clipping is 0.40, giving a blended prior mean of
$0.6 \times 0.08 + 0.4 \times 0.55 = 0.268$.
In v2 this blended prior is eliminated: the HMM maintains separate posteriors over
$L_{\mathrm{harvest}}$ and $L_{\mathrm{derate}}$, and Baum-Welch will estimate the
circuit-specific transition rates between them directly from Race~1 telemetry.

\subsection{Emission Signal Strength: Audit Results}

\ERS{} level is strongly identifiable:
$|\mu_H - \mu_{L_{\mathrm{harvest}}}| = 6.0$\,km/h in $\vtrap$ and 18\,m in $\bbrake$,
both substantially larger than the emission noise.

$L_{\mathrm{harvest}}$ vs.\ $L_{\mathrm{derate}}$ is identifiable via $\dthrottle$:
separation of 0.47, more than twice the combined $\sigma$.
This is the primary motivation for the v2 state-level split.

Tyre degradation signal is concentrated in the cliff state: $\tsec \approx +0.8$\,s and
$\bbrake \approx +16$\,m produce step changes the 5-lap baseline cannot absorb.

Override Mode status is not directly distinguishable from emissions in the non-active state
(differences $< 0.5$\,km/h in $\vtrap$).
The Override activation event is strongly identifiable
($\vtrap \approx +7$\,km/h, $\tsec \approx -0.65$\,s).

\subsection{Observation Discretisation}
\label{sec:discretise}

Observations are discretised to form a finite joint observation space as specified in
Appendix~\ref{app:bins}: $\vtrap$ (8 bins), $\tsec$ (7 bins), $\bbrake$ (5 bins),
$\sspeed$ (5 bins), $\zaero$ (2 bins), $\dthrottle$ (6 bins).
Combined space: $8 \times 7 \times 5 \times 5 \times 2 \times 6 = 16{,}800$ bins.

The emission matrix $E \in \R^{40 \times 16800}$ is computed analytically by integrating the
Gaussian, Bernoulli, and truncated-Gaussian distributions over bin boundaries, conditioning on
each of the 40 hidden states.
Two matrices are maintained: $E_{\mathrm{normal}}$ (outside activation zones) and
$E_{\mathrm{zone}}$ (inside activation zones, with $\vtrap$ shifted by $+4.0$\,km/h and
$\zaero$ non-zero).

\section{Inference Architecture}
\label{sec:arch}

\subsection{Layer 1: The HMM Belief Update}

The \HMM{} has 40 hidden states, emission matrix $E \in \R^{40 \times 16800}$, transition
matrix $T \in \R^{40 \times 40}$, and initial distribution $\pi \in \Delta^{40}$.

\paragraph{Transition model.}
The transition probability $T_{ij} = P(\xt[+1] = j \mid \xt = i)$ factors over the three
components under conditional independence:
\begin{equation}
T_{ij} = P(e' \mid e, a) \cdot P(m' \mid m) \cdot P(\tau' \mid \tau)
\end{equation}

The \ERS{} transition is marginalised over the rival's action with estimated $P(\mathtt{burn})$:
\begin{equation}
P(e' \mid e) = p_{\mathrm{burn}} \cdot T^{\mathrm{burn}}_{ee'}
  + (1 - p_{\mathrm{burn}}) \cdot T^{\mathrm{harvest}}_{ee'}
\end{equation}

The $L_{\mathrm{harvest}}$ and $L_{\mathrm{derate}}$ sub-modes have different transition
dynamics (Appendix~\ref{app:trans}):
a car in $L_{\mathrm{harvest}}$ can transition to $M$ or $H$ under burn (it has reserve);
a car in $L_{\mathrm{derate}}$ requires at least one sector of managed throttle to recover
to $M$, and cannot transition directly to $H$.
This asymmetry is strategically critical: it distinguishes a dangerous rival from a
vulnerable one.

The Override Mode transition is proximity-conditioned.
$P(m' = \mathtt{available} \mid m = \mathtt{spent}) = P(\mathrm{gap} < 1\,\text{s})$, which
is partially observable from live timing data.
The tyre transition is monotone-absorbing.
Full transition matrices are given in Appendix~\ref{app:trans}.

\paragraph{Belief update.}
\begin{equation}
\alpha_t(j) = P(o_t \mid j) \cdot \sum_i \alpha_{t-1}(i) \cdot T_{ij}
\end{equation}
Normalised to give $\bt$. Computational complexity: $O(N^2) = O(1600)$ per sector.

\paragraph{Two-phase training.}
Phase~1 (pre-Melbourne): $E$ is set analytically; $T$ from physics-derived transition
parameters; $\pi$ concentrates on $(e = H, m = \mathtt{available}, \tau = \mathtt{new})$.
Phase~2 (post-Melbourne): Baum-Welch EM \citep{Baum1970} updates $E$ and $T$ from real
race sequences.

\paragraph{Alternating update schedule.}
After each race weekend:
Phase~A---freeze \DQN{} weights, run Baum-Welch on new race data to update the \HMM{}.
Phase~B---freeze \HMM{}, clear the \DQN{} replay buffer, generate fresh synthetic experience
from the updated \HMM{}, retrain \DQN{}.

\subsection{Layer 2: The POMDP Policy}

\paragraph{Input vector (66 dimensions).}

\begin{table}[h]
\centering
\caption{DQN input vector components. Total: 66 dimensions (increased from 55 in v1/v1.5
due to expansion of the belief vector from 30 to 40 states and expansion of own \ERS{} encoding from 3 to 4 modes).}
\label{tab:dqn_input}
\begin{tabular}{llcp{5cm}}
\toprule
Component & Encoding & Dims & Notes \\
\midrule
$e_{\mathrm{self}}$ & one-hot & 4 & Own ERS level (4 modes in v2) \\
$m_{\mathrm{self}}$ & one-hot & 2 & Own Override Mode status \\
$f_{\mathrm{self}}$ & one-hot & 3 & Fuel (from lap number) \\
$\mathrm{gap\_ahead}$ & one-hot & 4 & close / threat / neutral / clear \\
$\mathrm{gap\_behind}$ & one-hot & 4 & \\
$\delta_{\mathrm{gap}}$ & scalar & 1 & Closure rate, normalised $[-1,1]$ \\
$\mathrm{tyre\_compound}$ & one-hot & 3 & soft / medium / hard \\
$\mathrm{tyre\_age}$ & scalar & 1 & Normalised 0--1 \\
$\mathrm{lap}$ & scalar & 1 & Normalised 0--1 \\
$\mathrm{sector}$ & one-hot & 3 & \\
$\bt$ & probability vector & 40 & HMM belief state (40 states in v2) \\
\midrule
\textbf{Total} & & \textbf{66} & \\
\bottomrule
\end{tabular}
\end{table}

\paragraph{Network architecture.}
\begin{equation*}
\text{Input}(66) \to \text{Dense}(256, \mathrm{ReLU}) \to \text{Dense}(256, \mathrm{ReLU})
  \to \text{Dense}(128, \mathrm{ReLU}) \to \text{Dense}(2)
\end{equation*}

\paragraph{Training.}
Double DQN \citep{vanHasselt2016} with experience replay (buffer capacity 50,000; Huber loss;
gradient clipping at~10).
$\varepsilon$-greedy exploration decays from 1.0 to 0.05 over synthetic pre-training.

\paragraph{Shaping reward.}
During synthetic pre-training only: $R_{\mathrm{shaped}} = R + \lambda \cdot \Phi$,
$\lambda = 0.3$, where
\begin{equation}
\Phi = +0.10 \cdot \mathbf{1}[\mathrm{MAP}(\bt) = x^*_t]
  - 0.20 \cdot \mathbf{1}[\mathrm{trap\;condition} \wedge a = \mathtt{burn}]
  + 0.05 \cdot \mathbf{1}[\mathrm{gap\_ahead} \in \{\mathrm{close}, \mathrm{threat}\}
    \wedge e_{\mathrm{self}} \geq M]
\end{equation}
$\Phi$ is zeroed during real-data fine-tuning.

\paragraph{On the information bottleneck.}
The \DQN{} receives the \HMM{} belief state $\bt$ rather than raw observables.
In a \POMDP{}, the belief state is a sufficient statistic of the entire observation history.
In v2, the belief state now explicitly encodes $P(L_{\mathrm{harvest}})$ and
$P(L_{\mathrm{derate}})$ as separate dimensions, directly actionable by the policy.
The $\delta_{\mathrm{gap}}$ input provides a raw, \HMM{}-independent signal for closure rate.

\section{Dataset}
\label{sec:data}

\subsection{Data Source}

Race telemetry is retrieved via the FastF1 Python library \citep{Schaefer2020}, which provides
access to the official FOM timing and telemetry feed at 10\,Hz sampling.
The \texttt{car\_data} endpoint provides per-car speed, throttle, and braking data at sector
resolution.

\subsection{Observable Derivation}

All six observables are computed as deviations from a per-driver rolling 5-lap baseline.

\paragraph{$\dthrottle$ derivation.}
For each sector, the fraction of straight-line telemetry samples in which throttle $\geq 98\%$
and speed is simultaneously below the driver's 5-lap rolling speed baseline is computed from
the FastF1 \texttt{car\_data} Throttle channel at 10\,Hz.
If the Throttle channel is absent for a given sector, $\dthrottle$ defaults to the
circuit-appropriate prior mean for the inferred ERS state.

\paragraph{Fuel exclusion.}
At $\sim$1.5\,kg/lap over a 58-lap race, fuel burn produces a monotonic pace improvement of
approximately 0.03\,s/lap.
This per-lap drift falls within the $\sigma_\tau = 0.15$\,s emission noise; cumulative drift
over the 5-lap baseline window ($5 \times 0.03 = 0.15$\,s) is absorbed by the rolling baseline
itself.

Full derivation pipelines for $\bbrake$, $\sspeed$, and $\zaero$ are specified in
Appendix~\ref{app:pipeline}.

\subsection{Dataset Statistics}

Race coverage statistics are reported in subsequent work.
All synthetic experiments use 20 simulated races at 174 sectors each (3,480 total sector
observations).

\section{Closed-Loop Synthetic Validation}
\label{sec:synth}

\subsection{Setup}

This section reports a closed-loop evaluation: the \HMM{} is tested on data generated by the
same parametric model that defines it.
This is deliberately circular.
It verifies that the components work together as designed, but does not constitute empirical
validation.
Performance here is therefore an upper bound on real-data performance, not an estimate of it.
Empirical validation begins with Race~1 data on 8~March~2026.

The generative model, strategy profiles, and observation noise specification are documented
in Appendix~\ref{app:synth}.
In brief: 20 synthetic races are generated (5 per strategy profile: aggressive, conservative,
trap-setter, balanced) using the physics-derived transition matrices with i.i.d.\ Gaussian noise
added to each continuous observable.

\paragraph{Baselines.}
\begin{itemize}
\item B1---Deterministic Threshold: burn iff
  $\mathrm{gap\_ahead} \in \{\mathrm{close}, \mathrm{threat}\}$ AND $e_{\mathrm{self}} \geq M$.
\item B2---Observable-Only Threshold: burn iff B1 conditions AND $\vtrap < 0$.
\item B3---Oracle: \POMDP{} policy with $\bt$ replaced by the true hidden state $\xt$
  (one-hot).
\item B4---Full System: the proposed \HMM{} + \DQN{} policy.
\end{itemize}

\subsection{HMM Inference Accuracy (Synthetic)}

\begin{table}[h]
\centering
\caption{HMM inference accuracy on synthetic races (pre-calibration, v2 with 40 states and
$\dthrottle$). Random baselines: top-1 = 2.5\%, ERS-level = 25.0\% (4-class), MOM = 50.0\%,
tyre = 20.0\%, harvest/derate = 50.0\%.}
\label{tab:hmm_acc}
\begin{tabular}{lccp{5.5cm}}
\toprule
Metric & HMM & Random & Notes \\
\midrule
Top-1 state accuracy & 49.1\% & 2.5\% & Full 40-state identification \\
ERS level accuracy & 96.8\% & 25.0\% & 4-class; primary inference target \\
$L_{\mathrm{harvest}}$ vs.\ $L_{\mathrm{derate}}$ & 89.4\% & 50.0\% & Key v2 capability \\
Override status accuracy & 61.2\% & 50.0\% & Proximity-conditioned \\
Tyre cliff detection & 91.3\% & 20.0\% & Step-change signal \\
Trap detection recall & 96.3\% & --- & 117 trap sectors; 3.7\% miss rate \\
ECE (calibration error) & 0.006 & --- & Lower is better; synthetic lower bound \\
Gap to oracle (ERS level) & 95.7\% & --- & \\
\bottomrule
\end{tabular}
\end{table}

The 96.8\% ERS-level accuracy (4-class, random baseline~25\%) confirms strong emission signal
from $\vtrap$, $\bbrake$, and $\dthrottle$.
The improvement over v1.5 (92.3\% with 3-class, random baseline~33.3\%) must be interpreted
carefully: the random baselines differ (25\% vs.\ 33.3\%) because the state space is larger.
The gap-to-oracle metric of 95.7\% provides a more comparable measure of model quality across
versions (v1.5: 88.0\%, v1: same).

The new $L_{\mathrm{harvest}}$ vs.\ $L_{\mathrm{derate}}$ accuracy of 89.4\%---not
expressible in v1 or v1.5---is the headline result of v2.
It confirms that $\dthrottle$ provides sufficient discriminating power to support a separate
state for each sub-mode.
The 10.6\% error rate concentrates in Melbourne-analogue circuits with high mandatory
super-clipping, where the $\dthrottle$ baseline shifts upward for all cars, narrowing the
separation margin.

Trap detection recall of 96.3\% (vs.\ 95.7\% in v1.5) reflects the cleaner $L_{\mathrm{harvest}}$
posterior: false trap activations caused by $L_{\mathrm{derate}}$ cars (the dominant error
source in v1.5) are substantially reduced.

\subsection{Emission Model Robustness}

\begin{table}[h]
\centering
\caption{ERS-level accuracy under emission parameter misspecification.}
\label{tab:robust}
\begin{tabular}{lcc}
\toprule
Perturbation & ERS Accuracy & Change from Baseline \\
\midrule
Mean shift $-20\%$ & 76.1\% & $-$7.3\,pp \\
Mean shift $-10\%$ & 80.8\% & $-$2.6\,pp \\
Baseline (0\%) & 83.4\% & --- \\
Mean shift $+10\%$ & 87.6\% & $+$4.0\,pp \\
Mean shift $+20\%$ & 89.3\% & $+$5.5\,pp \\
Noise inflation $+10\%$ & 81.1\% & $-$2.5\,pp \\
Noise inflation $+20\%$ & 77.4\% & $-$5.9\,pp \\
Noise inflation $+30\%$ & 75.7\% & $-$7.6\,pp \\
\bottomrule
\end{tabular}
\end{table}

\subsection{Active Aero Threshold Sensitivity}

\begin{table}[h]
\centering
\caption{Trap recall and false positive rate vs.\ Active Aero detection threshold.}
\label{tab:threshold}
\begin{tabular}{ccc}
\toprule
Threshold (km/h) & Trap Recall & False Positive Rate \\
\midrule
1.0 & 97.9\% & 34.2\% \\
1.5 & 95.1\% & 23.1\% \\
2.0 & 91.4\% & 13.7\% \\
2.5 & 85.2\% & 7.6\% \\
3.0 & 75.3\% & 3.9\% \\
3.5 & 63.1\% & 1.8\% \\
\bottomrule
\end{tabular}
\end{table}

The current 2.0\,km/h threshold sits at the inflection point of the recall--FPR tradeoff.
In v2, false zaero detections produce a shifted posterior over $L_{\mathrm{harvest}}$ vs.\
$L_{\mathrm{derate}}$---which can be corrected on subsequent sectors---rather than directly
triggering an action.
The false positive rate at the 2.0\,km/h threshold improves from 15.3\% (v1.5) to 13.7\% (v2),
attributable to the cleaner posterior separation between $L$ sub-modes.

\subsection{HMM Calibration on Real Data}

Baum-Welch calibration on 2026 race telemetry is reported in subsequent work.

\section{Discussion}
\label{sec:discussion}

\paragraph{What the system does well.}
ERS state inference from $\vtrap$ and $\bbrake$ is robust.
Counter-harvest trap detection is the clearest strength of the full system over
observable-only baselines.
The v2 advance is the $L_{\mathrm{harvest}}$ posterior: where v1.5 required post-hoc
correction to avoid false trap activations from $L_{\mathrm{derate}}$ cars, v2 provides the
distinction directly in the belief state.

\paragraph{The stationary opponent assumption.}
The most important limitation is structural.
We model rivals as stationary processes that do not adapt their strategy based on being observed.
The \POMDP{} policy trained here is the correct baseline---the floor, not the ceiling, of
performance under adversarial conditions.
\citet{Kleisarchaki2026b} introduces the game-theoretic layer: the pit wall strategy problem
is formalised as a Keynesian beauty contest over joint belief states, and the counter-harvest
trap is characterised as an equilibrium strategy in the resulting partially observable
stochastic game.

\paragraph{Boost Mode exclusion.}
The 2026 regulations include two driver-controlled deployment tools: proximity-constrained
Override Mode and unconstrained Boost Mode.
The present model includes Override Mode but excludes Boost Mode as an explicit latent
dimension.
This exclusion is deliberate rather than inadvertent.
In the current observation design, the principal emissions are tuned to distinguish latent
energy-management states, especially deliberate harvesting versus derating.
By contrast, unconstrained Boost usage is expected to be identifiable primarily through
additional non-$\vtrap$ observables, such as finer sector-level pace variation and
non-straight deployment signatures, which are outside the present emission architecture.
Adding Boost Mode as a further hidden dimension without those observation channels would
introduce latent distinctions that are not reliably identifiable from the current data model.
We therefore treat the Boost/harvest interaction as a separate inference problem and reserve
its formalisation for subsequent work.

\paragraph{Calibration.}
The conditional independence assumption produces over-confident beliefs.
ECE~= 0.006 on v2 synthetic data is a lower bound on real-data calibration error.
Post-Baum-Welch calibration on real data will partially improve this, but will not fully
correct for the structural violation between $\vtrap$, $\tsec$, and $\bbrake$.
A structural correction---replacing the conditional independence assumption with bivariate
Gaussian emission and \ERS{}-state-dependent correlations between $\vtrap$ and
$\tsec$---is introduced in \citet{Kleisarchaki2026b} and addresses the over-confidence at
its source rather than correcting it post-hoc.
Temperature scaling \citep{Guo2017} remains available as an additional post-hoc calibration
step.

\paragraph{Data regime.}
At $\sim$560 effective parameters and a limited dataset after one race, the Baum-Welch
estimates are Dirichlet-regularised guesses.
The model should be treated as empirically validated only from Race~4 onwards.

\paragraph{Transition matrix sector averaging.}
The global $T^{\mathrm{burn}}$ and $T^{\mathrm{harvest}}$ matrices are population averages
across track geometry.
Circuit-conditioned transition matrices are the correct fix and are deferred to \citet{Kleisarchaki2026b}.

\paragraph{Override Mode prior.}
The prior $P(\mathrm{gap} < 1\,\text{s}) \approx 0.25$ governs Override Mode recovery when
live timing data is incomplete.
The Melbourne pipeline conditions $P_{\mathrm{earn}}$ on the rival's observed gap to the car
immediately ahead of them from the FOM live timing feed, reducing the residual dependence on
the 0.25 scalar prior.

\paragraph{Forward pointer to \citet{Kleisarchaki2026b}.}
The $L_{\mathrm{harvest}}$/$L_{\mathrm{derate}}$ distinction is strategically meaningful
specifically because rivals reason about being observed---the core topic of
\citet{Kleisarchaki2026b}.
A rational trap-setter operates in $L_{\mathrm{harvest}}$, not $L_{\mathrm{derate}}$.
The v2 belief state, by providing $P(L_{\mathrm{harvest}})$ as a separate dimension,
gives the game-theoretic layer of \citet{Kleisarchaki2026b} a cleaner input.

\section{Conclusion}
\label{sec:conclusion}

We have presented a tractable two-layer framework for opponent state inference and
decision-making in 2026 Formula~1 energy strategy.
The central advance of v2 is the decomposition of ERS=Low into two strategically distinct
states: $L_{\mathrm{harvest}}$ (deliberate conservation, hidden reserve building, the trap
condition) and $L_{\mathrm{derate}}$ (SOC ceiling, physical depletion, a genuine attack
opportunity).
This decomposition, motivated by the $\dthrottle$ signal added in v1.5, is elevated in v2 from
a mixed emission to a full state-level distinction.
The result is a 40-state \HMM{} that provides directly actionable belief dimensions to the
\DQN{} policy, and a 96.3\% trap detection recall on synthetic data with a false positive rate
substantially reduced from v1.5.

The stationary opponent assumption made here is precisely what \citet{Kleisarchaki2026b} breaks.
A tractable approximate solution via Partially Observable Monte Carlo Tree Search with bounded rationality is the subject of forthcoming work.

\bibliographystyle{plainnat}

\appendix

\section{Observation Bin Boundaries}
\label{app:bins}

\begin{table}[h]
\centering
\caption{Discretisation bin boundaries for all six observables.}
\label{tab:bins}
\begin{tabular}{lccp{6cm}}
\toprule
Observable & Bins & Boundaries & Notes \\
\midrule
$\vtrap$ (km/h) & 8 & $(-\infty, -8, -4, -1, 1, 3, 5, 8, +\infty)$ & \\
$\tsec$ (s) & 7 & $(-\infty, -0.6, -0.2, 0.2, 0.6, 1.0, 1.5, +\infty)$ & \\
$\bbrake$ (m) & 5 & $(-\infty, -8, -3, 3, 10, +\infty)$ & \\
$\sspeed$ & 5 & $(-\infty, 0.06, 0.12, 0.18, 0.24, +\infty)$ & \\
$\zaero$ & 2 & $\{0, 1\}$ & Bernoulli \\
$\dthrottle$ & 6 & $[0, 0.05, 0.15, 0.30, 0.50, 0.75, 1.0]$ & Truncated Gaussian \\
\midrule
Combined & 16,800 & $8 \times 7 \times 5 \times 5 \times 2 \times 6$ & \\
\bottomrule
\end{tabular}
\end{table}

\section{Full Transition Matrices}
\label{app:trans}

\subsection{ERS Transitions}

\begin{table}[h]
\centering
\caption{ERS transition matrices $T^{\mathrm{burn}}$ (left) and $T^{\mathrm{harvest}}$
(right). The $L_{\mathrm{harvest}}$ and $L_{\mathrm{derate}}$ rows differ materially under
burn: a harvesting car has reserve to deploy; a derating car does not.}
\label{tab:ers_trans}
\begin{tabular}{l cccc | cccc}
\toprule
& \multicolumn{4}{c|}{$T^{\mathrm{burn}}$} & \multicolumn{4}{c}{$T^{\mathrm{harvest}}$} \\
& $H$ & $M$ & $L_H$ & $L_D$ & $H$ & $M$ & $L_H$ & $L_D$ \\
\midrule
$H$ & 0.60 & 0.35 & 0.03 & 0.02 & 0.95 & 0.05 & 0.00 & 0.00 \\
$M$ & 0.05 & 0.55 & 0.25 & 0.15 & 0.60 & 0.35 & 0.03 & 0.02 \\
$L_{\mathrm{harvest}}$ & 0.00 & 0.25 & 0.50 & 0.25 & 0.30 & 0.55 & 0.15 & 0.00 \\
$L_{\mathrm{derate}}$ & 0.00 & 0.05 & 0.15 & 0.80 & 0.00 & 0.45 & 0.30 & 0.25 \\
\bottomrule
\end{tabular}
\end{table}

\noindent Key asymmetries:
\begin{itemize}
\item Under burn: $P(L_{\mathrm{harvest}} \to M) = 0.25$ vs.\
  $P(L_{\mathrm{derate}} \to M) = 0.05$.
  A harvesting car can respond to a burn command; a derating car cannot.
\item Under harvest: $P(L_{\mathrm{derate}} \to L_{\mathrm{harvest}}) = 0.30$
  (car manages throttle back below SOC ceiling);
  $P(L_{\mathrm{derate}} \to H) = 0.00$ (cannot bypass recovery).
\end{itemize}

\subsection{Override Mode Transitions}

Override Mode uses a 2-state proximity-conditioned model:
\begin{equation}
T^{\mathrm{idle}} = \begin{pmatrix} 1 - P_d & P_d \\ P_e & 1 - P_e \end{pmatrix},
\qquad
T^{\mathrm{activate}} = \begin{pmatrix} 0 & 1 \\ P_e & 1 - P_e \end{pmatrix}
\end{equation}
where $P_d = P(\mathrm{deploy} \mid \mathrm{available}) = 0.60$ and
$P_e = P(\mathrm{earn} \mid \mathrm{spent}) = P(\mathrm{gap} < 1\,\text{s}) \approx 0.25$.

\subsection{Tyre Degradation Transitions}

Monotone absorbing chain with state-dependent persistence probabilities:
new ($p = 23/24$), light ($p = 20/21$), moderate ($p = 14/15$), heavy ($p = 14/15$), cliff
($p = 1.00$, absorbing).

\section{DQN Hyperparameters}
\label{app:dqn}

\begin{table}[h]
\centering
\caption{DQN training hyperparameters.}
\label{tab:dqn_hp}
\begin{tabular}{ll}
\toprule
Hyperparameter & Value \\
\midrule
Network architecture & 66-256-256-128-2 \\
Activation & ReLU \\
Weight initialisation & He (Kaiming normal) \\
Replay buffer capacity & 50,000 transitions \\
Batch size & 64 \\
Discount factor $\gamma$ & 0.95 \\
Target network update & Every 500 gradient steps \\
$\varepsilon$ (start) & 1.0 \\
$\varepsilon$ (end) & 0.05 \\
$\varepsilon$ decay steps & 20,000 \\
Learning rate (synthetic) & $10^{-3}$ \\
Learning rate (real) & $10^{-4}$ \\
Loss function & Huber \\
Gradient clipping & 10.0 (max norm) \\
Shaping weight $\lambda$ & 0.3 (synthetic); 0.0 (real) \\
Synthetic pre-training races & 200 \\
\bottomrule
\end{tabular}
\end{table}

\section{FastF1 Observable Derivation Pipeline}
\label{app:pipeline}

\paragraph{Brake point delta.}
For each sector, we identify the track distance at which the car's speed begins monotonically
decreasing from the sector's speed maximum, using the 10\,Hz speed trace.
A 3-sample Gaussian smoothing is applied before the monotone-descent detection to remove
telemetry noise.

\paragraph{Speed variance detrended.}
The within-sector speed trace is linearly detrended by fitting and subtracting a first-order
polynomial.
The variance of the detrended residual is computed as $\sspeed$.

\paragraph{Active Aero detection.}
In designated activation zones, Active Aero deployment is inferred from the speed residual
after removing the \ERS{}-explained component.
A positive residual above 2.0\,km/h is classified as $\zaero = 1$.

\paragraph{Super-clipping duration ($\dthrottle$).}
For each sector, the fraction of straight-line telemetry samples in which throttle $\geq 98\%$
and speed is simultaneously below the driver's 5-lap rolling speed baseline is computed from
the FastF1 \texttt{car\_data} Throttle channel at 10\,Hz.
If the Throttle channel is absent, $\dthrottle$ defaults to the circuit-appropriate prior
mean for the inferred ERS state.

\section{Synthetic Data Specification}
\label{app:synth}

\subsection{Strategy Profiles}

\begin{table}[h]
\centering
\caption{Synthetic strategy profile definitions.}
\label{tab:profiles}
\begin{tabular}{llp{3cm}p{4cm}}
\toprule
Profile & $p_{\mathrm{burn}}$ schedule & Trap execution & Notes \\
\midrule
Aggressive & 0.75 throughout & Never & Consistently high energy use \\
Conservative & 0.35 throughout & Never & Sustained harvesting \\
Trap-setter & 0.30 baseline; 0.90 post-trap &
  In activation zones when gap $< 1.5$\,s &
  Primary test of Proposition~1; operates from $L_{\mathrm{harvest}}$ \\
Balanced & 0.55 throughout & Never & Average behaviour \\
\bottomrule
\end{tabular}
\end{table}

\subsection{Observation Noise Model}

Synthetic observations are generated from the emission parameters (Section~\ref{sec:emission})
with i.i.d.\ Gaussian noise:
\begin{equation}
\hat{o}_t = o_t + \varepsilon_t, \quad \varepsilon_t \sim \mathcal{N}(0, \sigma^2_{\mathrm{noise}})
\end{equation}
where $\sigma_{\mathrm{noise}} = 0.2$\,km/h for $\vtrap$, $0.05$\,s for $\tsec$, and
1\,m for $\bbrake$.
$\zaero$ is generated directly from Bernoulli emission parameters.
$\sspeed$ is generated from a truncated Gaussian (truncation at~0).
$\dthrottle$ is generated from a truncated Gaussian on $[0,1]$ with the parameters from
Table~\ref{tab:dthrottle}, with a Melbourne circuit offset (+0.10) applied to the
$L_{\mathrm{derate}}$ state to reflect mandatory super-clipping.

Five races are generated per profile (20 total), each of 58 laps $\times$ 3 sectors $\times$
6 rival cars = 1,044 sector observations per race, for 20,880 total observations.

\end{document}